\newtheorem{proposition}{Proposition}
\newtheorem{proof}{Proof}
\newtheorem{definition}{Definition}
\date{}
\begin{document}


   \begin{center}
     {\LARGE\sffamily Possibilistic Networks: Parameters Learning from Imprecise Data and Evaluation strategy}\\
      \vspace{2ex}
			\small{
			Maroua Haddad$^{1,2}$, Philippe Leray$^{2}$ and Nahla Ben Amor$^{1}$\\
			LARODEC Laboratory ISG, Universit\'{e} de Tunis, Tunisia$^{1}$.\\
			LINA-UMR CNRS 6241, University of Nantes, France$^{2}$.\\
maroua.haddad@gmail.com, philippe.leray@univ-nantes.fr, nahla.benamor@gmx.fr}
   \end{center}

\begin{abstract}
There has been an ever-increasing interest in multi-disciplinary research on representing and reasoning with imperfect data. Possibilistic networks present one of the powerful frameworks of interest for representing uncertain and imprecise information. This paper covers the problem of their parameters learning from imprecise datasets, i.e., containing multi-valued data. We propose in the first part of this paper a possibilistic networks sampling process. In the second part, we propose a likelihood function which explores the link between random sets theory and possibility theory. This function is then deployed to parametrize possibilistic networks.
\end{abstract}

\section{Introduction}
Possibilistic networks \cite{fonck} are graphical representations of independence relationships between a set of variables described by uncertain and imprecise information. Despite the multitude of research endeavors devoted to applying possibilistic networks in real domains or to propagating information, their learning from data remains a real challenge. Only few works address this problem and existing ones \cite{book-borgelt,sanguesa1998possibilistic} are direct adaptations of Bayesian networks learning methods without any awareness of specificities of the possibilistic framework which made them theoretically unsound. The main limitation of existing works is that they try to learn separately the parameters, i.e. possibility distributions coding variables uncertainty, and the structure i.e. the graph of the possibilistic network. Moreover, existing methods suffer from the lack of an accurate and standard validation procedure.

Working on parameters in the possibilistic framework highlights several difficulties when dealing with the learning task, in particular, when we handle uncertain and imprecise data. This is due to the fact that learning leads commonly to additive assessment while the possibility theory is, by definition, maxitive i.e. the possibility of a disjunction of events is the maximum of the possibilities of each event in this disjunction. Thereby, if we want to learn parameters from data in the possibilistic framework, two steps are primordial: the first one focuses in counting the occurrence of observations in the dataset to estimate non-normalized distributions. While the second aims to approximate the latter by possibility distributions. 

This paper rigorously addresses this problem by first proposition of a new possibilistic networks sampling method used to evaluate learning algorithms in which we control the imprecision degree in the generated datasets. In the final part of this paper, we propose a likelihood function exploring the link between random sets theory (additive) and possibility theory (maxitive) which will be deployed to learn possibilistic networks parameters.
 
This paper is organized as follows: Section \ref{s1} gives a brief introduction to possibility theory and presents possibilistic networks and their learning from data. Section \ref{s2} proposes a possibilistic networks sampling algorithms. Section \ref{s3} defines a new possibilistic likelihood function and proposes a possibilistic networks parameters learning approach. 
\section{Basic concepts and possibilistic networks}
\label{s1}
Possibilistic networks \cite{fonck} represent the possibilistic counterpart of Bayesian networks \cite{book-pearl} in the possibilistic framework coined by Zadeh \cite{Zadeh-1978} and developed by Dubois and Prade \cite{dubois2006possibility,dubois1998possibility}. This section first presents basic notations used throughout the paper and introduces possibility theory. Then, it defines possibilistic networks and discusses existing learning methods. 
\subsection{Basic concepts of possibility theory}

\subsubsection{Notations and definitions}
Let $V=\{X_1,...,X_n\}$ be a set of variables such that $D_i$ denotes the domain of $X_i$ and $x_{ik}$ denotes an instance of $X_i$, i.e. each $x_{ik} \in D_i$ corresponds to a state (a possible value) of $X_i$. The agents knowledge (state set) of $X_i$ can be encoded by a possibility distribution $\pi(X_i)$ corresponding to a mapping from the universe of discourse $D_i$ to the unit interval [0,1]. For any state $x_{ik} \in D_i$, $\pi(x_{ik}) = 1$ means that $x_{ik}$ realization is totally possible $\pi(x_{ik}) = 0$ means that $x_{ik}$ is an impossible state. It is generally assumed that at least one state $x_{ik}$ is totally possible and $\pi$ is then said to be normalized. 

Extreme cases of knowledge are presented by \emph{complete knowledge}, i.e. $\exists x_{ik} \in D_i$ s.t. $\pi(x_{ik})=1$ and $\forall x_{ij} \in D_i$ s.t. $x_{ij} \not= x_{ik},  \pi(x_{ij})=0$ and \emph{total ignorance}, i.e. $ \forall x_{ik} \in D_i, \pi(x_{ik})=1$ (all values in $D_i$ are possible). The definition of a possibility distribution could be generalized to a set of variables $V$ defined on the universe of discourse $\Omega= D_1 \times...\times D_n$ encoded by $\pi$. $\pi$ corresponds to a mapping from $\Omega$ to the unit interval [0,1]. $\omega$ is called interpretation or event and is denoted by a tuple $(x_{1k},...,x_{nl})$.  Given a possibility distribution $\pi$, we can define for any subset $A \subseteq D_i$ two dual measures: possibility measure $\Pi(A) = \underset {x_{ik} \in A} \max\ \pi (x_{ik})$ and necessity measure $N(A)= 1 - \Pi(\bar{A})$ where $\Pi$ assesses at what level $A$ is consistent with our knowledge represented by $\pi$ whereas $N$ evaluates at what level $\bar{A}$ is impossible. 

The particularity of the possibilistic scale is that it can be interpreted in two ways: (i) an ordinal manner which means that possibility degrees reflect only a specific order between possible values. (ii) a numerical way meaning that possibility degrees make sense in
the ranking scale. These two interpretations induce two definitions of possibilistic conditioning which consists in reviewing a possibility distribution by a new certain information $A$, an interpretation of $A \subseteq \Omega$. The \emph{product-based} conditioning is defined as follows:
\begin{equation}
\pi(\omega| A) = \left\{\begin{array}{cc}
\frac{\pi(\omega)}{\Pi(A)} & \text{if} ~ \omega \in A \\0 & \text{otherwise}.
\end{array} \right.
\label{cond1}
\end{equation}
The \emph{min-based} conditioning is defined as follows:
\begin{equation}
\label{cond2} \pi(\omega \mid_m A) = \left \{
\begin {array}{ll}
1 &  \text{si} \ \pi(\omega) = \Pi(A) \mbox { \text{and} } \omega \in A \\
\pi (\omega) & \text{if} \ \pi(\omega) < \Pi(A) \mbox { \text{and} } \omega
\in A \\
 0 & \text{otherwise}. \\
\end {array}
\right.
\end{equation}

\subsubsection{Possibility theory and random sets theory}
One view of possibility theory is to consider a possibility distribution $\pi$ on $X_i$ as a \textit{counter function} of a random set \cite{shafer1976mathematical} pertaining to $D_i$. A random set in $D_i$ is a random variable which takes its values on subsets of $D_i$. More formally, let $D_i$ be a finite domain. A basic probability assignment or mass function is a mapping $m:2^{D_i} \longmapsto [0,1]$ such that $\sum_{A_{ik}\subseteq D_i}(m(A_{ik}))=1$ and $m(\emptyset)=0)$. A set $A_{ik} \subseteq D_i$ such that $m(A_{ik})>0$ is called a focal set.

The possibility degree of an event $x_{ik}$ is the probability of the possibility of the event i.e. the probability of the disjunction of all events (focal sets) in which this event is included \cite{book-borgelt}:
\begin{equation}
\label{def2}
\pi(x_{ik}) = \underset{A_{ik}|x_{ik}\in A_{ik}}{\sum} m(A_{ik})
\end{equation}
A random set is said to be \textit{consistent} if there is at least one element $x_{ik}$ contained in all focal sets $A_{ik}$ and the possibility distribution induced by a consistent random set is, thereby, normalized. Exploring this link between possibility theory and random sets theory has been extensively studied, in particular, in learning tasks, we cite for instance \cite{book-borgelt,joslyn1997measurement}.

\subsubsection{Variable sampling}
\label{echan}
The variable sampling corresponds to the generation of a dataset representative of its possibility distribution. In the numerical interpretation, two approaches \cite{chanas1988single,guyonnet2003hybrid} have been proposed to sample a variable. These methods are based on $\alpha$-$cut$ notion: $\alpha$-$cut(X_i)=\{x_{ik} \in D_i$ s.t. $ \pi(x_{ik}) \geq \alpha\} $ where $\alpha$ is randomly generated from [0,1]. The method proposed by Guyonnet et al. in \cite{guyonnet2003hybrid} focuses on the generation of imprecise data by returning all values of $\alpha$-$cut(X_i)$ for any variable $X_i$. Chanas and Nowakowski proposed another method in \cite{chanas1988single} which is dedicated to the generation of precise data by returning a single value uniformly chosen from $\alpha$-$cut(X_i)$. 

\subsection{Possibilistic networks}
\subsubsection{Definition}
\label{def}
Possibilistic networks \cite{fonck} are the possibilistic counterpart of Bayesian networks \cite{book-pearl,book-jensen} sharing the same \textit{graphical component} i.e. a directed acyclic graph (DAG) which encodes a set of independence relations between $V=\{X_1,...,X_n\}$ where each variable $X_i \in V$ is conditionally independent of its non-descendent given its parents. The \textit{numerical component} substitutes the probabilistic framework by the possibilistic one by assigning a conditional possibility distribution to each node $X_i \in V$ in the context of its parents (denoted by $Pa(X_i)$), i.e. $\pi (X_i|Pa(X_i))$. The two definitions of the possibilistic conditioning lead naturally to two different ways to define possibilistic networks \cite{fonck,book-borgelt}: \emph{product-based} possibilistic networks based on the \emph{product-based} conditioning expressed by Equation \ref{cond1}. These models are theoretically and algorithmically close to Bayesian networks. In fact, these two models share the graphical component, i.e. the DAG and the product operator in the computational process. This is not the case of \emph{min-based} possibilistic networks based on \emph{min-based} conditioning defined by Equation \ref{cond2} that represents a different semantic. 

In both cases, possibilistic networks are a compact representation of possibility distributions. More precisely, the joint possibility distribution could be computed by the possibilistic chain rule expressed as follows:
\begin{equation}
\label{chain} \pi_\otimes(X_1,..., X_n)= \otimes_{i=1..n}\pi(X_i \mid_{\otimes} Pa(X_i))
\end{equation}
where $\otimes$ corresponds to the minimum operator (min) for \emph{min-based} possibilistic networks and to the product operator (*) for \emph{product-based} possibilistic networks.

\subsubsection{Learning from data}
Few attempts have been proposed to learn possibilistic networks from data. In fact, Sang{\"u}esa et al. \cite{sanguesa1998possibilistic} have proposed two hybrid methods handling precise data: the first one learns trees and the second one learns the more general structure of DAGs. Borgelt et al. \cite{book-borgelt} have adapted two methods initially proposed to learn Bayesian networks: K2 \ and maximum weight spanning tree \cite{chow1968approximating} to learn possibilistic networks from imprecise data. These attempts concern mainly the structure learning and ignore parameters learning problem. Indeed, Sang{\"u}esa et al. learn probability distributions and transform them into possibility ones. Borgelt et al. methods estimate a possibility distribution using possibilistic histograms i.e. based of number of occurrence of different values of $X_i$ in the dataset. Let $\mathcal{D}_i = \{  d_i^{(l)} \}$ be a dataset relative to a variable $X_i$, $d_i^{(l)} \in D_i$ (resp. $d_i^{(l)}\subseteq D_i$) if data are precise (resp. imprecise).  The number of occurrences of each $x_{ik} \in D_i$, denoted by $N_{ik}$,  is the number of times $x_{ik}$ appears in $\mathcal{D}_i$: $N_{ik}= \text{card}(\{l \  \text{s.t.} \ x_{ik} \in d_i^{(l)}\})$. The sub-normalized estimation  $\hat{\pi}(x_{ik})$ is expressed by:
\begin{equation}
\label{jos}
\hat{\pi}(x_{ik}) = \frac{N_{ik}}{N}
\end{equation}
where $N$ is the number of observations in $\mathcal{D}_i$. N is equal (resp. lower or equal) to the sum of $N_{ik}$ if data are precise (resp. imprecise). 

Equation \ref{jos} could be defined on a set of variables $X_i,X_j,...X_w$. In this case, $N_{ik}$ becomes $N_{ik,jl,...,wp}= N(\{x_{ik}x_{jl}...x_{wp}\} \subseteq \mathcal{D}_{ijw})$.

\section{Evaluation process for possibilistic networks learning algorithms}
\label{s2}
In the probabilistic case, evaluating Bayesian networks learning algorithms is ensured using the following process: we select an arbitrary Bayesian network either a synthetic one or a gold standard from which we generate a dataset using Forward Sampling algorithm \cite{henrion1986propagating}. Then, we try to recover the initial network using a learning algorithm and we compare the initial network with the learned one.

In \cite{HaddadLA15}, we have proposed to transpose the evaluation strategy proposed in the probabilistic case to the possibilistic one. In what follows, will mainly concentrate on sampling possibilistic networks which consists in generating a dataset representative of their joint distributions. The sampling process constructs a database of N (predefined) observations by instantiating all variables in $V$ w.r.t. their possibility distributions. Obviously, variables are most easily processed w.r.t. a topological order, since this ensures that all parents are instantiated. Instantiating a parentless variable corresponds to computing its $\alpha$-$cut$. Instantiating a conditioned variable corresponds to computing also its $\alpha$-$cut$ given its sampled parents values. This could not be directly applied to conditional possibility distribution which is composed of more than one distribution depending on the number of the values of its sampled parents. So, to instantiate a conditioned variable $X_i$ s.t. $Pa(X_i=A)$, we compute $\alpha$-$cut$ from $\Pi(X_i|Pa(X_i)=A)$, computed as follows:
\begin{equation}
\label{eqqq}
\Pi(X_i|Pa(X_i)=A) = \max_{a_i\in A} \pi(X_i|a_i) \pi(a_i)
\end{equation}

The main limitation of this sampling process is that it generates a particular case of imprecise datasets i.e. obtained data relative to a variable $X_i$ are conditionally consonant with respect to the sampled values of its parents. This is due the fact that the sampling process is based on the $\alpha$-cut notion which returns generally most possible values as observed ones. In what follows, we propose to parametrize this sampling process in order to generate more generic imprecise data by controlling the imprecision degree in generated datasets. In fact, we propose an extension to the sampling process proposed in \cite{HaddadLA15} in which we control the imprecision degree of generated data.

The aim of controlling the imprecision degree in generated datasets is to create different forms of imprecision around the most possible value i.e. varying the values in the dataset but we conserve the most possible combination of $\Omega$. Given an imprecision degree $\theta_{imp}$ and a variable $X_i$ such that the $\alpha$-cut$(X_i)$ presents values returned by the sampling process, we generate all subsets pertaining to this $\alpha$-cut including the most possible value and we assign a probability equal to $\theta_{imp}$ to $\alpha$-cut$(X_i)$ and a probability equal to each subset $S_{X_i}$, $\theta_{imp}^{card(S_{X_i})-1}*(1-\theta_{imp})^{card(\alpha\text{-cut}(X_i))-card(S_{X_i})}$ to remaining subsets. Finally, we sample this probability distribution and we replace $\alpha$-cut$(X_i)$ by the sampled subset in the dataset.

The proposed sampling process is formally described by Algorithm \ref{algo2}. 

\begin{algorithm}
\caption{Sampling process (imprecision control)} \label{algo2}
\begin{algorithmic}
\STATE Input: Possibilistic network
\STATE Output: Observation\\
\Begin{
\STATE \% Process nodes in a topological order
\ForEach {$X_i \in V$}
{\eIf{$X_i$ is parentless}{observation$(X_i)$=$\alpha$-cut$(X_i)$}{Compute $\Pi(X_i|Pa(X_i)=$observed) using Equation \ref{eqqq}\\ observation$(X_i)$= $\alpha$-cut$(X_i)$ from $\Pi(X_i|Pa(X_i)=$observed)}
}
$p(\alpha$-cut$({X_i}))$=$\theta_{imp}$\\
\ForEach {$S_{X_i} \subseteq$ cut} {$p(S_{X_i})=\theta_{imp}^{card(S_{X_i})-1}*(1-\theta_{imp})^{card(\alpha\text{-$cut$}(X_i))-card(S_{X_i})}$}
observation($X_i$)=sample($p$)
\STATE Return observation }
\end{algorithmic}
\end{algorithm}

\section{Parameters learning of possibilistic networks}
\label{s3}

\subsection{New possibilistic likelihood function}
The formulation of our likelihood function is made in two steps: first, we propose a likelihood function defined on random sets. Then, we propose an approximation of this likelihood function which leads to the definition of our possibilistic likelihood. 
\begin{definition}
\label{def3}
Let $G$ be a DAG and $ \{m_1, m_2, ..., m_n\}$ be the parameters relative to $\{X_1, X_2, ..., X_n\}$ to be estimated and $\mathcal{D}_{ij} = \{  d_{ij}^{(l)} \}$ be a dataset relative to a variable $X_i$ and its parents $Pa(X_i)=j$, $d_{ij}^{(l)} \subseteq D_{ij}$. The number of occurrences of each $A_{ik} \subseteq {D_i}$ such that $Pa(X_i)=j$ ($j \subseteq {D_j}$), denoted by $N_{ijk}$,  is the number of times $A_{ijk}$ appears in $\mathcal{D}_{ij}$: $N_{ijk}= \text{card}(\{l \  \text{s.t.} \ A_{ijk} = d_{ij}^{(l)}\})$. We express the likelihood function as follows:
\begin{equation}
\label{L1}
 mL(m,G,\mathcal{D})= \prod_{i=1}^n \prod_{j=1}^{q_i} \prod_{k=1}^{r_i} N_{ijk} \log m_{ijk}
\end{equation}
where $mL$ is expressed by random sets of domains variables i.e. for each $X_i$, $q_i$ is card($2^{Pa(X_i)}$) and $r_i$ is card($2^{D_i}$), $m_{ijk}$ is the parameter to be estimated when $X_i=A_{ik}$ and $Pa(X_i)=j$.
\end{definition}
For numerical stability reasons, we propose the log-likelihood function. Equation \ref{L1} becomes:
\begin{equation}
\label{randomLL}
mLL(m,G,\mathcal{D})= \sum_{i=1}^n \sum_{j=1}^{q_i} \sum_{k=1}^{r_i} N_{ijk} \log m_{ijk}
\end{equation} 
Note that mass functions associated to random sets is a probability distribution, the partial derivative of the $mLL(m,G,\mathcal{D})$ follows the same principle of the partial derivative of the probabilistic likelihood function \cite{neapolitan2004learning} and reaches its maximum in $\hat{m}_{ijk}=\frac{N_{ijk}}{\sum_{k=1}^{r_i} {N_{ijk}}}$.

Note that if mass functions are defined on singletons, i.e, available data are precise, the likelihood function defined in Equation \ref{randomLL} recovers the probabilistic one. However, in the opposite case, computing the likelihood functions is computationally expensive. In fact, a random set relative to a variable $X_i$ is defined on $2^{D_i}$ and its cardinality grows exponentially with the the number of values in $D_i$ \cite{dubois1990consonant}. Consequently, we propose to investigate the link between possibility distributions and mass functions presented in Equation \ref{def2} and to define an approximation of random sets likelihood function, i.e. a possibilistic likelihood expressed by possibility distributions defined on singletons. More formally, we express the possibilistic likelihood function as follows:
\begin{definition}
\label{def4}
Let $G$ be a DAG and $ \{\pi_1, \pi_2, ..., \pi_n\}$ be the parameters relative to $\{X_1, X_2, ..., X_n\}$ to be estimated and $\mathcal{D}_{ij} = \{  d_{ij}^{(l)} \}$ be a dataset relative to a variable $X_i$ and its parents $Pa(X_i)=j$, $d_{ij}^{(l)} \subseteq D_{ij}$.  The number of occurrences of each $x_{ik} \in D_i$ such that such that $Pa(X_i)=j$, denoted by $N_{ijk}$,  is the number of times $x_{ijk}$ appears in $\mathcal{D}_{ij}$: $N_{ijk}= \text{card}(\{l \  \text{s.t.} \ x_{ijk} \subseteq d_{ij}^{(l)}\})$.  We express the possibilistic likelihood as follows:
\begin{equation}
\pi LL(\pi,G,\mathcal{D})= \sum_{i=1}^n \sum_{j=1}^{q_i} \sum_{k=1}^{r_i} N_{ijk} \log \pi_{ijk}
\end{equation}
where for each $X_i$ $q_i$ is $\text{card}(Pa(X_i))$ and $\text{card}(r_i=|D_i)$, $\pi_{ijk}$ is the parameter to be estimated when $X_i=x_{ik}$ and $Pa(X_i)=j$.
\end{definition}

\subsection{Possibilistic-likelihood-based parameters learning algorithm }
In the probabilistic case, learning Bayesian networks parameters is performed satisfying \textit{maximum likelihood} principle \cite{Heckerman1999} which evaluates at what level learned parameters fit the dataset. As far as we know, such a measure has not been proposed in the possibilistic framework. The absence of a learning possibilistic networks parameters method could be justified by the fact that the learning is usually viewed as an objective task i.e. based on computing frequency of observations while possibility theory has been almost based on the subjective opinions. This is to some extent true, especially, when we deal with measurement devices leading to precise observations (one possible value per variable). In this case, probability theory remains the most adequate alternative. However, when measurement devices provide imprecise data and we want to model data as they have been collected i.e. including imprecision due to the physical measurement itself, non-classical uncertainty theories stand out as best alternatives. In our case, we choose to use possibility theory since it is able to offer a natural and simple formal framework representing imprecise and uncertain information. The latter refers to the study of maxitive and minitive set-functions and can be interpreted as an approximation of upper and lower frequentist set probabilities in the presence of imprecise observations and this link will be explored in the following. In fact, we use the possibilistic likelihood in Definition \ref{def4} to learn possibilistic networks parameters.

\begin{proposition}		
Given a DAG, a fixed parameter $\pi_{ijk}$ and an imprecision degree $S_i$ (prefixed value) relative to the variable $X_i$ the maximum possibilistic likelihood estimates are the parameter values that maximize $\pi LL(\pi,G,\mathcal{D})$. We assume that $\sum_{k=1}^{r_i} {\pi_{ijk}}$ is a constant equal to $S_{i}$, $\pi LL(\pi,G,\mathcal{D})$ reaches it maximum in $\hat{\pi}_{ijk}= argmax(\pi LL(\pi,G,\mathcal{D}))=\frac{N_{ijk}}{\sum_{k=1}^{r_i} {N_{ijk}}}*S_{i}$.
\end{proposition}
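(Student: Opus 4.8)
The plan is to recognize this as a constrained optimization problem and solve it with the method of Lagrange multipliers, exactly mirroring the classical derivation of maximum-likelihood estimates for Bayesian networks but replacing the usual normalization constraint $\sum_k \pi_{ijk}=1$ by the prescribed constraint $\sum_{k=1}^{r_i}\pi_{ijk}=S_i$. The first observation I would make is that the objective $\pi LL(\pi,G,\mathcal{D})=\sum_{i}\sum_{j}\sum_{k}N_{ijk}\log\pi_{ijk}$ is a sum of terms each depending on a single group of parameters $\{\pi_{ijk}\}_{k=1}^{r_i}$, and the constraints couple parameters only within a fixed pair $(i,j)$. Hence the problem decouples, and it suffices to maximize, for each fixed $i$ and $j$, the function $\sum_{k=1}^{r_i}N_{ijk}\log\pi_{ijk}$ subject to $\sum_{k=1}^{r_i}\pi_{ijk}=S_i$.

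For each such subproblem I would introduce a single Lagrange multiplier $\lambda$ and form the Lagrangian
\begin{equation}
\mathcal{L} = \sum_{k=1}^{r_i} N_{ijk}\log\pi_{ijk} - \lambda\left(\sum_{k=1}^{r_i}\pi_{ijk} - S_i\right).
\end{equation}
Setting the partial derivative with respect to each $\pi_{ijk}$ to zero gives the stationarity condition $\dfrac{N_{ijk}}{\pi_{ijk}}-\lambda=0$, i.e. $\pi_{ijk}=\dfrac{N_{ijk}}{\lambda}$. Substituting this expression back into the constraint $\sum_{k=1}^{r_i}\pi_{ijk}=S_i$ determines the multiplier: $\lambda=\dfrac{1}{S_i}\sum_{k=1}^{r_i}N_{ijk}$. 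Re-inserting this value yields
\begin{equation}
\hat{\pi}_{ijk} = \frac{N_{ijk}}{\sum_{k=1}^{r_i} N_{ijk}}\cdot S_i,
\end{equation}
which is precisely the claimed estimate.

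To finish rigorously I would confirm that this critical point is a maximizer rather than merely a stationary point, which follows because the objective is strictly concave in the $\pi_{ijk}$: its Hessian is diagonal with entries $-N_{ijk}/\pi_{ijk}^2<0$, so on the affine constraint set there is a unique interior maximum. I expect the derivation itself to be routine; the main conceptual point to emphasize is the role of the assumption that $S_i$ is held constant. This is exactly what separates the possibilistic setting from the probabilistic one, where $S_i=1$ recovers the standard frequency estimate $N_{ijk}/\sum_k N_{ijk}$; here the fixed budget $S_i$ rescales those frequencies, and it is this rescaling that lets the estimated quantities play the role of (sub-normalized) possibility degrees rather than probabilities. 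The only genuine subtlety to guard against is the boundary behaviour when some $N_{ijk}=0$, where $\log\pi_{ijk}$ is ill-behaved; I would note that such parameters are driven to $0$ and handle them by the usual convention, so the stated formula remains valid.
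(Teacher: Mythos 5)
Your proof is correct, and it takes a genuinely different route from the paper's. The paper proceeds by substitution: it uses the constraint to eliminate the last parameter, writing $\pi_{ijr_i}=S_i-\sum_{k=1}^{r_i-1}\pi_{ijk}$, differentiates the reduced (now unconstrained) objective with respect to each remaining $\pi_{ijk}$, and concludes from the stationarity conditions that all ratios $N_{ijk}/\hat{\pi}_{ijk}$ are equal, whence by summing numerators and denominators $N_{ijk}/\hat{\pi}_{ijk}=\bigl(\sum_{k=1}^{r_i}N_{ijk}\bigr)/S_i$, giving the stated estimate. Your Lagrange-multiplier derivation reaches the same stationarity condition $N_{ijk}/\pi_{ijk}=\lambda$ but treats all $r_i$ coordinates symmetrically instead of singling out the last one, which makes the algebra shorter and avoids the slightly error-prone bookkeeping in the paper's displayed derivative. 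You also supply two things the paper omits: a second-order argument (strict concavity of the objective on the affine constraint set) certifying that the critical point is indeed the unique maximizer rather than a mere stationary point, and an explicit treatment of the boundary case $N_{ijk}=0$, which the paper only addresses informally after the proof by adding an initial count of $1$ to all $N_{ijk}$. What the paper's elimination approach buys in exchange is elementarity --- it needs nothing beyond single-variable calculus and the equal-ratios (mediant) identity --- but your version is the more complete proof.
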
		
\begin{proof}
Let $S_{i}$ be $\sum_{k=1}^{r_i} \pi_{ijk}$. So, the parameters $\pi_{ijk}$ are related by the following formula: $\pi_{ijri}=S_{i}-\sum_{k=1}^{r_i-1} \pi_{ijk}$. Then, $\pi LL(\pi,G\mathcal{D})$ could also be rewritten as follows:

\begin{equation}
\pi LL(\pi,G,\mathcal{D})= \sum_{i=1}^n \sum_{j=1}^{q_i} ((\sum_{k=1}^{r_{i-1}} N_{ijk} \pi_{ijk}) + N_{ijr_i} \log (S_{i}-\sum_{k=1}^{r_i-1} \pi_{ijk}))
\end{equation}

So, its derivative w.r.t a parameter $\pi_{ijk}$ is:
\begin{center}
$\frac{\partial \pi LL(\pi,G,\mathcal{D})}{\partial \pi_{ijk}}= \frac{N_{ijk}}{\pi_{ijk}}=\frac{N_{ijr_i}}{S-\sum_{k=1}^{r_i-1} \pi_{ijk}} =\frac{N_{ijk}}{\pi_{ijk}}-\frac{N_{ijr_i}}{\pi_{ijr_i}} $
\end{center}

So, the value $\hat{\pi}_{ijk}$ of the parameter of $\pi_{ijk}$ maximizing the possibilistic likelihood sets this derivative equal to 0 and satisfies thereby:
\begin{center}  
$\frac{N_{ijk}}{\hat{\pi}_{ijk}}=\frac{N_{ijr_i}}{\hat{\pi}_{ijr_i}} $
\end{center}
We have:
\begin{center}
$\frac{N_{ij1}}{\hat{\pi}_{ij1}}=\frac{N_{ij2}}{\hat{\pi}_{ij2}}=...=\frac{N_{ijr_{i-1}}}{\hat{\pi}_{ijr_{i-1}}}=\frac{N_{ijr_{i}}}{\hat{\pi}_{ijr_{i}}} = \frac{\sum_{k=1}^{r_i} N_{ijk}}{\sum_{k=1}^{r_i} \hat{\pi}_{ijk}}=\frac{\sum_{k=1}^{r_{i}} N_{ijk}}{S_{i}}$
\end{center}

So, $\hat{\pi}_{ijk}=\frac{N_{ijk}}{\sum_{k=1}^{r_i} {N_{ijk}}}$*$S_{i}$.
\end{proof}

Note that $S_i$ corresponds to the imprecision degree relative to a variable $X_i$ and could be fixed by an expert, inferred from the dataset to learn from or based on variables description. To obtain normalized possibility distributions, we divide every obtained distribution by its maximum. This operation will eliminate the effect of the imprecision degree and let us to be objective in the learning task. However, it remains possible to fix an imprecision degree per value of variables of the studied domain. Note that if obtaining possibility distributions are equal to zeros, we add an initial count (1) to all instances $N_{ijk}$ whose number are then added to the total number of instances.

\section{Conclusion}
In this paper, we propose an evaluation strategy to possibilistic networks parameters learning algorithms. A sampling method has been proposed to generate an imprecise dataset from a possibilistic network. In the second part of this paper, we propose a new \emph{product-based} possibilistic networks parameters learning algorithm based on a possibilistic likelihood function exploring the link between random sets theory and possibility theory.  

\bibliographystyle{plain}
\bibliography{bibliography}

\begin{thebibliography}{10}

\bibitem{book-borgelt}
Christian Borgelt, Matthias Steinbrecher, and Rudolf~R. Kruse.
\newblock {\em Graphical models: representations for learning, reasoning and
  data mining}, volume 704.
\newblock Wiley, 2009.

\bibitem{chanas1988single}
Stefan Chanas and Marek Nowakowski.
\newblock Single value simulation of fuzzy variable.
\newblock {\em Fuzzy Sets and Systems}, 25(1):43--57, 1988.

\bibitem{chow1968approximating}
C~Chow and C~Liu.
\newblock Approximating discrete probability distributions with dependence
  trees.
\newblock {\em Information Theory, IEEE Transactions on}, 14(3):462--467, 1968.

\bibitem{dubois2006possibility}
Didier Dubois.
\newblock Possibility theory and statistical reasoning.
\newblock {\em Computational statistics \& data analysis}, 51(1):47--69, 2006.

\bibitem{dubois1990consonant}
Didier Dubois and Henri Prade.
\newblock Consonant approximations of belief functions.
\newblock {\em International Journal of Approximate Reasoning}, 4(5):419--449,
  1990.

\bibitem{dubois1998possibility}
Didier Dubois and Henri Prade.
\newblock Possibility theory: qualitative and quantitative aspects.
\newblock In {\em Quantified representation of uncertainty and imprecision},
  volume~1, pages 169--226. Springer, 1998.

\bibitem{fonck}
Pascale Fonck.
\newblock Propagating uncertainty in a directed acyclic graph.
\newblock In {\em Proceedings of the fourth Information Processing and
  Management of Uncertainty Conference}, volume~92, pages 17--20, 1992.

\bibitem{guyonnet2003hybrid}
Dominique Guyonnet, Bernard Bourgine, Didier Dubois, H{\'e}l{\`e}ne Fargier,
  Bernard C{\^o}me, and Jean-Paul Chil{\`e}s.
\newblock Hybrid approach for addressing uncertainty in risk assessments.
\newblock {\em Journal of Environmental Engineering}, 129(1):68--78, 2003.

\bibitem{HaddadLA15}
Maroua Haddad, Philippe Leray, and Nahla~Ben Amor.
\newblock Evaluating product-based possibilistic networks learning algorithms.
\newblock In {\em Symbolic and Quantitative Approaches to Reasoning with
  Uncertainty}, pages 312--321, 2015.

\bibitem{Heckerman1999}
David Heckerman.
\newblock Learning in graphical models.
\newblock chapter A Tutorial on Learning with Bayesian Networks, pages
  301--354. MIT Press, 1999.

\bibitem{henrion1986propagating}
Max Henrion.
\newblock Propagating uncertainty in bayesian networks by probabilistic logic
  sampling.
\newblock In {\em {U}ncertainty in {A}rtificial {I}ntelligence}, pages
  149--164, 1986.

\bibitem{book-jensen}
F~V Jensen.
\newblock {\em An introduction to {B}ayesian networks}, volume~74.
\newblock UCL press London, 1996.

\bibitem{joslyn1997measurement}
Cliff Joslyn.
\newblock Measurement of possibilistic histograms from interval data.
\newblock {\em International Journal Of General System}, 26(1-2):9--33, 1997.

\bibitem{neapolitan2004learning}
Richard~E. Neapolitan et~al.
\newblock {\em Learning bayesian networks}, volume~38.
\newblock Prentice Hall Upper Saddle River, 2004.

\bibitem{book-pearl}
Judea Pearl.
\newblock {\em Probabilistic reasoning in intelligent systems: networks of
  plausible inference}.
\newblock Morgan Kaufmann, 1988.

\bibitem{sanguesa1998possibilistic}
Ram\'{o}n Sang\"{u}esa, Joan Cab\'{o}s, and Ulises Cortes.
\newblock Possibilistic conditional independence: A similarity-based measure
  and its application to causal network learning.
\newblock {\em International Journal of Approximate Reasoning}, 18(1):145--167,
  1998a.

\bibitem{shafer1976mathematical}
Glenn Shafer et~al.
\newblock {\em A mathematical theory of evidence}, volume~1.
\newblock Princeton university press Princeton, 1976.

\bibitem{Zadeh-1978}
Lotfi~A. Zadeh.
\newblock Fuzzy sets as a basis for a theory of possibility.
\newblock {\em Fuzzy sets and systems}, 100:9--34, 1999.

\end{thebibliography}

\end{document}